\newtheorem{problem}{Problem}
\newtheorem{lemma}{Lemma}
\newcommand{\hide}[1]{}
\newcommand{\bit}{\begin{compactitem}}
\newcommand{\eit}{\end{compactitem}}
\newcommand{\ben}{\begin{compactenum}}
\newcommand{\een}{\end{compactenum}}
\newcommand{\method}{\textsc{FraudGuess}\xspace}
\newcommand{\methodone}{\textsc{\method-D}\xspace}
\newcommand{\methodtwo}{\textsc{\method-J}\xspace}
\newcommand{\insights}{{Insights\xspace}}
\newcommand{\discoveries}{{Discoveries\xspace}}
\newcommand{\afi}{{AFI}\xspace}
\newcommand{\afifull}{{Anonymous Financial Institution (\afi)\xspace}}
\newcommand{\afidataset}{{\texttt{\afi-txns}\xspace}}
\newcommand{\dmg}{{``Double Machine-gun''\xspace}}
\newcommand{\emphdmg}{{\em \dmg}}
\newcommand{\dmgexample}{{\em `93522'}}
\newcommand{\ph}{{``Penny Hunter''\xspace}}
\newcommand{\emphph}{{\em \ph}}
\newcommand{\phexample}{{\em `6269035'}}
\newcommand{\cp}{{``Bursty Poster''\xspace}}
\newcommand{\emphcp}{{\em \cp}}
\newcommand{\cpexample}{{\em `5653936'}}
\newcommand{\dataset}{\mathbf{D}}
\newcommand{\numfts}{k}
\newcommand{\ft}[1]{f_{#1}}
\newcommand{\fti}{\ft{i}}
\newcommand{\ftj}{\ft{j}}
\newcommand{\txn}[1]{t_{#1}}
\newcommand{\txni}{\txn{i}}
\newcommand{\txnj}{\txn{j}}
\newcommand{\txnk}{\txn{k}}
\newcommand{\detection}{{Detection\xspace}}
\newcommand{\emphdetection}{`{\em \detection}'}
\newcommand{\justification}{{Justification\xspace}}
\newcommand{\emphjustification}{`{\em \justification}'}
\newcommand{\gone}{{New Fraud Type Detection\xspace}}
\newcommand{\gtwo}{{Explainable\xspace}}
\renewcommand{\gone}{\emphdetection}
\renewcommand{\gtwo}{\emphjustification}
\newcommand{\scale}{{Scalable\xspace}}
\newcommand{\effective}{{Effective\xspace}}
\newcommand{\automatic}{{Parameter-Free\xspace}}
\newcommand{\financial}{{Feat. Sel. for Financial Setting\xspace}}
\renewcommand{\financial}{{Cust. for Fin. data\xspace}}
\newcommand{\emphasize}[1]{\textbf{\underline{\smash{#1}}}}
\newcommand{\gold}[1]{\cellcolor{green}{#1}}
\newcommand{\silver}[1]{\cellcolor{green!25}{#1}}
\newcommand{\targetCard}{{\em `target'}\xspace}
\begin{document}

\title{ \method: Spotting and Explaining New Types of Fraud 
in Million-Scale Financial Data}

\author{Robson L. F. Cordeiro}
\affiliation{%
  \institution{
  Carnegie Mellon University}
  \city{Pittsburgh, PA}
  \country{USA}
}
\email{robsonc@andrew.cmu.edu}

\author{Meng-Chieh Lee}
\affiliation{%
  \institution{
  Carnegie Mellon University}
  \city{Pittsburgh, PA}
  \country{USA}
}
\email{mengchil@cs.cmu.edu}


\author{Christos Faloutsos}
\affiliation{%
  \institution{
  Carnegie Mellon University}
  \city{Pittsburgh, PA}
  \country{USA}
}
\email{christos@cs.cmu.edu}

\begin{abstract}


Given a set of financial transactions (who buys from whom, when, and for how much),
as well as prior information from buyers and sellers,
how can we find fraudulent transactions?
If we have labels for some transactions for known types of fraud, 
we can build a classifier. 
However, we also want to find {\bf new} types of fraud,
still unknown to the domain experts (\emphdetection).
Moreover, we also want to provide evidence to experts
that supports our opinion (\emphjustification).
In this paper, we propose \method, to achieve two goals:
(a) for \emphdetection, it spots new types of fraud as micro-clusters
in a carefully designed feature space;
(b) for \emphjustification, it uses visualization and heatmaps for evidence, 
as well as an interactive dashboard for deep dives.

\method is used in real life and is currently considered for deployment
in an {\em \afifull}.
Thus, we also present the {\bf three} new behaviors that \method
discovered in a real, {\bf million}-scale financial dataset.
Two of these behaviors are deemed fraudulent or suspicious
by domain experts, catching {\bf hundreds}
of fraudulent transactions
that would otherwise go un-noticed.

\end{abstract}

\maketitle


\section{Introduction}
\label{sec:intro}


How can we spot {\em new types} of fraud
in financial transaction data?
How can we also provide the domain 
experts with intuitive evidence, 
which could convince a non-technical jury/judge?


\begin{figure*}[t!]
    \centering
    \begin{subfigure}[t]{0.3\textwidth}
        \centering
        \includegraphics[height=2.45in]{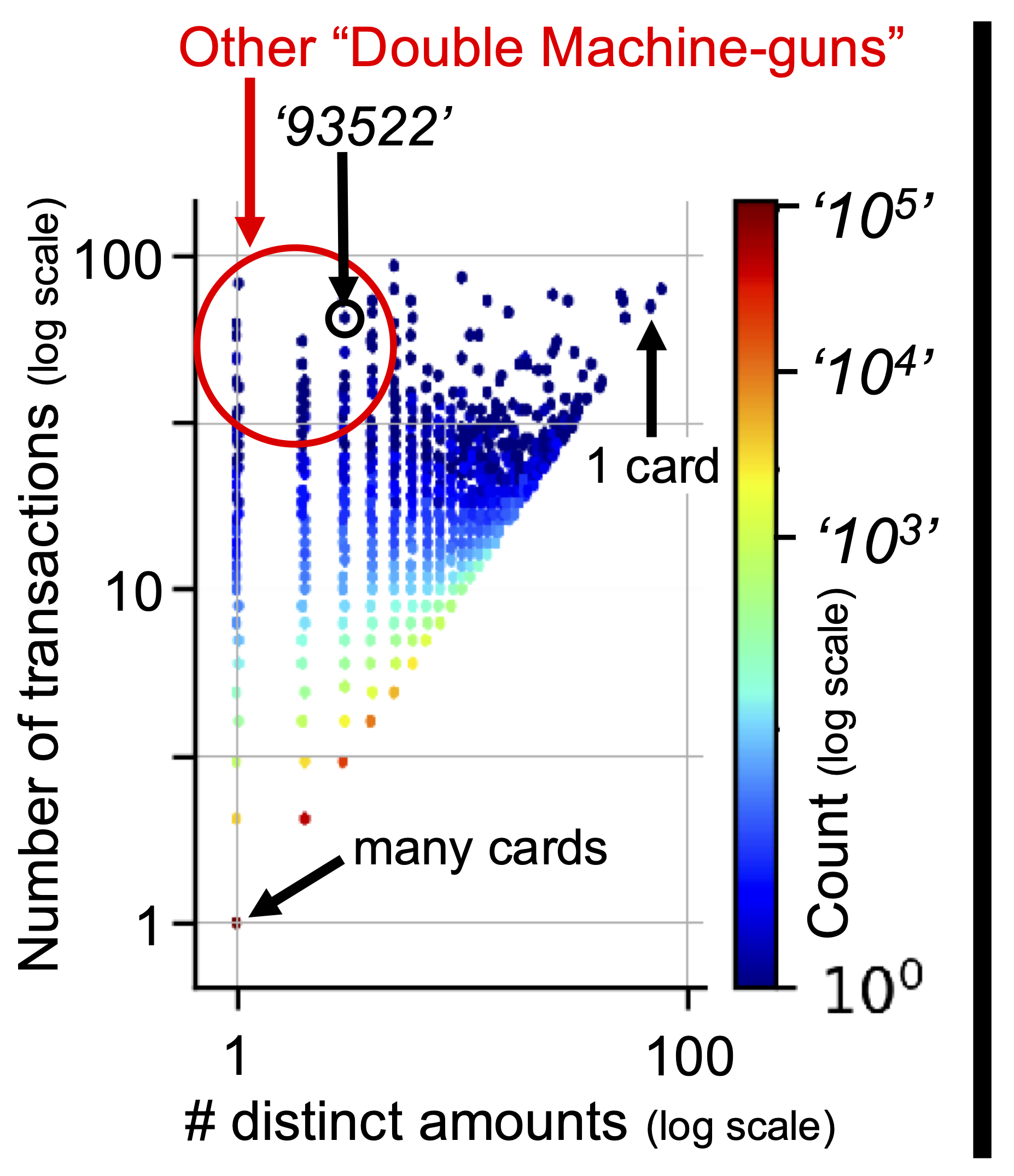}
        \caption{Goal G1 -- \emphdetection}
        \label{fig:dmg_others}
    \end{subfigure}%
    ~ 
    \begin{subfigure}[t]{0.7\textwidth}
        \centering
        \includegraphics[height=2.45in]{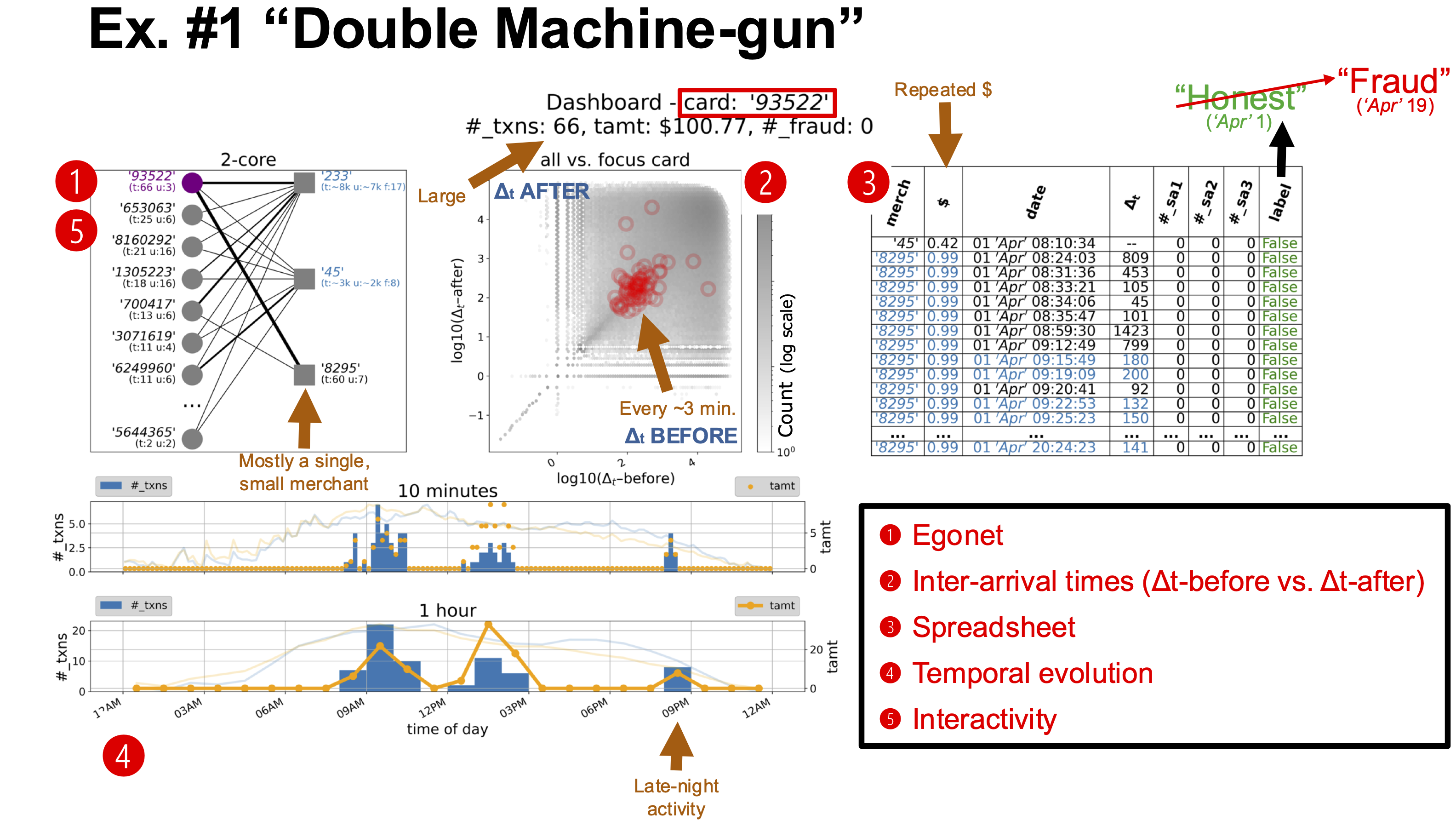}
        \caption{Goal G2 -- \emphjustification}
        \label{fig:dmg_example}
    \end{subfigure}
    \caption{\emphasize{\method found a NEW, suspicious behavior and justified its decisions.}
    (a) \emphdetection~(Goal G1): Our method caught tens of suspicious cards~(points inside the red circle) exhibiting the new behavior \dmg, with many, synchronized txns of the same amount.
    (b) \emphjustification~(Goal G2): \method justified its decisions via visual inspection. We showcase the interactive dashboard of the suspicious (and later, confirmed fraudster) card \dmgexample~detected before; note an unusual behavior with $66$ txns/day, often of $\$0.99$, at every $\sim\hspace{-0.5mm}3$ min. with a small merchant, and occasionally late at night.
    \label{fig:crownJewel}}
\end{figure*}

The research problems we want to solve are as follows:
\begin{problem}[\textbf{Generalized Fraud Detection}]\label{def:problem}
Given:
\begin{compactitem}
    \item Financial transactions (or simply `txns', for short) with card ID, merchant ID, amount, and timestamp,
    \item Features for the cards and merchants (optional), and
    \item Fraud/non-fraud labels for a few txns (optional).
\end{compactitem}
Address the following goals:
\begin{compactitem}
    \item (G0 - Classification: Find fraud txns similar to those already labeled),
    \item G1 - \emphdetection: Find \textbf{new} types of fraud, and
    \item G2 - \emphjustification: Explain our decisions.
\end{compactitem}
\end{problem}



\par \noindent
Since the \afi is already doing classification successfully (Goal G0),
we do not emphasize classification in this work.
For the remaining two goals, 
our main contributions are:
\bit
\item {\bf \emphdetection} (Goal G1): \methodone
finds {\em new types} of fraudulent behavior, before domain experts
are even aware of it.
\item {\bf \emphjustification} (Goal G2): \methodtwo 
uses visualization to explain and justify our decisions,
as well as an interactive dashboard for domain experts
to do deep dives and discover more evidence.
\eit
Our analysis by \method, revealed the following:
\bit
    \item {\bf \insights}: we discovered that $3$ to $5$ features (out of $400+$ available) are generally enough for fraud detection, including the $\#$ of txns, median amount, $\#$ of distinct amounts, and $\#$ of distinct inter-arrival times;
\item {\bf \discoveries}: we found three {\em new}, unusual behaviors \dmg, \ph, and \cp, with the first two being confirmed later as fraud or 
highly suspicious 
by our \afi~collaborators.
\eit

\paragraph{Dataset Anonymization:}
Before we illustrate some of the results of our method in Figure~\ref{fig:crownJewel}, 
we need to describe our anonymization steps, since
 the dataset contains sensitive financial data. 
In fact, all the card-IDs, all the merchant-IDs, and all the years and months
of the timestamps are hashed or obfuscated, indicated by
{\em `italics'} font and quotes.
All the other data (amount, day-of-month, time-of-day, etc.) are actual data.


\paragraph{Example of our results:}
Figure~\ref{fig:crownJewel} showcases a few results from \method.
For \emphdetection~(Goal G1), we carefully selected features and capitalize on heatmaps
to catch a \textit{new}, suspicious behavior: \dmg, that is, cards with many synchronized txns of the same amount.
Figure~\ref{fig:dmg_others} shows tens of real examples we caught (points inside the red circle).
For \emphjustification~(Goal G2), we leverage an interactive dashboard that deep dives on any card by showing its egonet, inter-arrival times and time of day of txns, etc., to explain and justify our decisions through visual inspection.
For example, Figure~\ref{fig:dmg_example} deep dives on the suspicious card \dmgexample~detected before.
Thanks to \method, we note unusual behavior with $66$ txns, mostly of $\$0.99$ with a single, small merchant, occurring at every $\sim3$ minutes within a day, and occasionally late at night.
Initially considered honest, these txns were later confirmed as fraud by the \afi~ collaborators, after they received an alarm from \method~and reviewed the case with its dashboard.
All the details are given later, in Section~\ref{sec:exp}.

{
{\bf Reproducibility}: Pending approval by the \afifull, 
we will open-source our code along with synthetic data in the appropriate format.
}

The outline of the paper is typical: 
we give the survey (Section~\ref{sec:background}),
the proposed method (Section~\ref{sec:meth}),
experiments (Section~\ref{sec:exp}),
and conclusions (Section~\ref{sec:concl}).


\section{Background and Related Work}
\label{sec:background}


The related work forms the following groups:
clustering, classification, and
anomaly detection methods.
However, 
\method is the {\em only one} that satisfies all properties, 
as shown in Table~\ref{tab:salesman}.
Notice that we explicitly stay away from `black-box' methods (e.g., deep learning and transformers) for explainability.

\paragraph{Clustering:}
Unsupervised methods, such as K-Means \cite{lloyd1982least}, 
DBSCAN \cite{ester1996density}, and OPTICS \cite{ankerst1999optics}, 
do not need class labels; they group similar data elements into clusters,
thus summarizing the dataset for the analyst.
As a side product, they also provide a list of outliers,
that don't fit in any of the clusters.

\paragraph{Classification:}
These algorithms handle the case of supervised learning:
they expect a training set with class labels,
and build a model to categorize the rest of the data entries.
Successful representatives include
AutoGluon \cite{agtabular} and 
XGBoost \cite{chen2016xgboost}.
We will not consider these methods, as they are unable
to spot {\em new} types of fraud (Goal G1 -- \emphdetection), which is our main focus.

\paragraph{Anomaly Detection:}
Traditional anomaly detection methods, such as Isolation Forest \cite{liu2008isolation} and RRCF \cite{guha2016robust}, focus on spotting point-anomalies.
Recently, an emerging problem is to spot group-anomalies, also known as micro-clusters.
Notable methods addressing this issue include \textsc{Gen$^2$Out} \cite{lee2021gen}, D.MCA \cite{jiang2022d}, and \textsc{McCatch} \cite{vinces2024mccatch}.
These approaches are more practical in financial fraud detection settings, where fraudulent transactions from the same micro-cluster are likely to be conducted by organized crime.

In addition, anomaly detection in graphs has been extensively studied \cite{akoglu2015graph}, focusing on identifying dense subgraphs \cite{DBLP:conf/kdd/HooiSBSSF16, DBLP:conf/icdm/ShinEF16}, frequent subgraphs \cite{lee2021gawd, lee2024descriptive}, and specific patterns such as ``smurfing'' \cite{lee2020autoaudit}.
Other relevant works, such as \textsc{CallMine} \cite{cazzolato2023callmine}, TgraphSpot \cite{cazzolato2022tgraphspot}, and TGRAPP \cite{cazzolato2023tgrapp}, detect and explain fraud in temporal graphs through visualizations; however, they are designed for phone call graphs and do not apply to financial graphs.


\begin{table}[htbp]
\caption{\emphasize{\method matches all specs}, while the competitors
miss one or more of the features. `?' means `depends',
or `not always possible'. \label{tab:salesman}}
\centering{\resizebox{1\columnwidth}{!}{
\begin{tabular}{ l| c | c | c | c || c}
   \diagbox{Property}{Method} & \rotatebox{80}{Clustering \cite{lloyd1982least, ester1996density, ankerst1999optics}} & \rotatebox{80}{Classification \cite{agtabular, chen2016xgboost}} & \rotatebox{80}{Anomaly Detection \cite{liu2008isolation, guha2016robust}} & \rotatebox{80}{\textsc{CallMine} \cite{cazzolato2023callmine}} & \rotatebox{80}{\Large \textbf{\method}} \\ 
\hline  
    G1: \emphdetection & \textbf{?} &  & \textbf{?} & \CheckmarkBold & \CheckmarkBold \\
    G2: \emphjustification &  &  &  & \CheckmarkBold & \CheckmarkBold \\ 
	\scale & \textbf{?} & \textbf{?} & \textbf{?} & \CheckmarkBold & \CheckmarkBold \\ 
    \automatic & \textbf{?} & \textbf{?} & \textbf{?} & \CheckmarkBold & \CheckmarkBold \\ 
    \financial &  &  &  &  & \CheckmarkBold \\
\end{tabular} 
}}
\end{table}

\section{Proposed Method}
\label{sec:meth}



In this section, we propose \method.
By analyzing the financial dataset \afidataset\xspace from \afi with \method, we reveal several surprising insights and discoveries.
As a reminder, we want to find {\em new types} of fraud (Goal G1: \emphdetection)
and to provide justification (Goal G2: \emphjustification).

The insights behind our \method are as follows:
\begin{compactitem}
    \item For \emphdetection, our main insight is to look 
    for lockstep behavior: fraudsters will often take the 
    same actions multiple times, often in rapid succession,
    and/or through multiple `sock-puppet' accounts.
    Thus, our insight is to look for micro-clusters
    in appropriate feature spaces.
    \item For \emphjustification, our main insight is to use 
    visualization and interaction.
    Thus, we stay away from black-box models 
    (like deep learning and transformers),
    and instead we provide several scatter plots and heatmaps,
    that need no statistical background to be understood
    (see Figure~\ref{fig:crownJewel}).
    \item A subtle issue is which features to use.
    We elaborate on this next.
\end{compactitem}

\hide{
\begin{table}[htbp]
\begin{center}
\begin{tabular}{|c|c|}
\hline  
Symbols & Definitions \\ 
\hline
$G$  & a graph \\ 
\hline  
$A$  &  adjacency matrix\\ 
\hline 
\end{tabular} 
\caption{Symbols and Definitions \label{tab:dfn}}
\end{center}
\end{table}
}

\subsection{Feature Selection}
\hide{
We have some labels.
Instead of using classification, we use forward feature selection with Isolation Forest to select the best features for anomaly detection.
By only using 3 out of 400+ features, our method remains competitive performance, while being capable of detecting new types of frauds.
...
} 
The \afi is already using $O(100)$ features for each transaction,
such as the dollar amount, 
whether the cardholder has been blocked in the past few days,
etc.
Such a number of features leads to the {\em curse of dimensionality} --
should we use them all? Should we drop some of them?
Should we add some more features?\looseness=-1

Our answer is two-fold: 
\begin{compactitem}
\item (a) we should do feature selection, and 
\item (b) we should add features that help us spot lockstep behavior.
\end{compactitem}

For the former, we applied forward selection using the class labels;
 we were surprised to see that out of the hundreds of features,
 only a handful were good enough (see Appendix~\ref{app:features}).
 
For the latter, the main insight is that fraudsters often do repetitive behaviors
(charge the same amount, every 10 seconds; or every day at 8am);
or they have multiple sock-puppet accounts, each charging the same small amount
to the same few colluding merchants, trying to keep everything `below the radar'.
In short, we try to find new types of fraud,
by spotting strange, repetitive (`lockstep') behavior.

Based on the above insights,
we propose the following new, carefully selected features:
\begin{compactitem}
\item Inter-arrival time (to catch repetitive 
      `machine-gun-like' behaviors).
\item Number of distinct amounts (again, to catch fraudsters that
    charge exactly the same small amount multiple times, as opposed
    to one large amount, to evade detection).
\item Time-of-day (for fraudsters that always operate 
    outside business hours, again to evade detection).
\end{compactitem}
We use functions of the above quantities (e.g. median/variance
of inter-arrival time, median/variance of amounts).

\subsection{\method Overview}

Algorithm~\ref{algo:main} shows the pseudo code of our \method.
It has two parts, one for each of the design goals:

\paragraph{Goal G1 \gone - \methodone} For the first part,
\method receives a set of txns $\dataset$ with card ID, merchant ID, amount, and timestamp; and then, it returns the most suspicious cards in~$\dataset$, with labels assigned according to distinct types of unusual behavior.
We first extract features $\ft{1},\dots \ft{\numfts}$ for each card in $\dataset$ (see Line $1$).
As shown later in Section~\ref{sec:g1},
four features ($\numfts=4$) are considered in our current implementation, although other features can be added at any time based on the domain knowledge of the users.

\paragraph{Goal G2 \gtwo - \methodtwo}
For the second part, 
we propose visualization with heatmaps to detect the most suspicious cards (see Lines~$2$ to $5$).
For each pair of features $\fti, \ftj \in \{\ft{1},\dots \ft{\numfts}\}:~i < j$,
we plot $\fti$ versus $\ftj$ in log-log scale using colors to distinguish variations in the density/overplotting of points.
Then, we ask the user to mark outlying points (cards) in the plot, e.g., by drawing an ellipse over them, if there are any.
Figure~\ref{fig:dmg_others} depicts an example heatmap created from dataset \afidataset, where a few outliers are marked by an ellipse.
Then, in Line~$6$, the user assigns labels to the groups of cards marked previously (group labeling).
In Lines~$7$ to $9$, an optional step displays dashboards of a few select cards of interest (e.g., one card per label type) for explanation and justification of the decisions through visual inspection.
Figure~\ref{fig:dmg_example} depicts an example dashboard created for a card from dataset \afidataset.
Finally, we return the cards marked and their labels in Line~$10$.

We provide the details of our \method in the following.

\begin{algorithm}[htbp]
\SetAlgoLined
\LinesNumbered
\KwData{A set of txns $\dataset$ with card ID, merchant ID, amount, and timestamp}
\KwResult{The most suspicious cards in $\dataset$, with labels}
\tcp{\methodone for Goal G1: \gone}
Extract feats. $\ft{1},\dots \ft{\numfts}$ for each card in $\dataset$; \tcp{$\numfts=4$}
\For{$\fti, \ftj \in \{\ft{1},\dots \ft{\numfts}\}:~i < j$}{
    Display a heatmap of $\fti$ versus $\ftj$ in log-log scale\;
    User marks outlying points (cards), if any\;
}
User assigns labels to marked cards (group labeling)\;
\tcp{\methodtwo for Goal G2: \gtwo}
\If{required by the user}{
Show dashboards of a few cards (e.g.,$1$ per label~type)\;
}
\Return marked cards and labels;
\caption{\method \label{algo:main}}
\end{algorithm}

\subsection{\methodone for G1 - \detection}
\label{sec:g1}
The main idea behind our \methodone is to detect automated (`lockstep') behavior:
if we see a card with many txns exhibiting signs of repetition/automation 
(e.g., same inter-arrival time, fixed amount, or always small amounts),
then we suspect fraud.
We carefully designed features to quantify these signs, and then we use visualization with heatmaps to detect the few most suspicious cases among millions of cards/txns in a fast and scalable manner.
Our features are carefully designed, 
so that they can be quickly (linearly) extracted from the set of txns of a card,
and at the same time provide information that can spot new types of fraud.

The proposed features are the following:
\begin{compactitem}
    \item \textbf{$\#$ of txns}: cardinality of the set of txns.
    \item \textbf{$\#$ of distinct inter-arrival times}: count of distinct inter-arrival times between pairs of consecutive txns, where the inter-arrival time is the time spent between two txns.
    \item \textbf{$\#$ of distinct amounts}: count of distinct amounts of txns.
    \item \textbf{Median amount}: median of the amounts of txns.
\end{compactitem}

\begin{figure*}[t!]
    \centering
    \begin{subfigure}[t]{0.7\textwidth}
        \centering
        \includegraphics[height=1.7in]{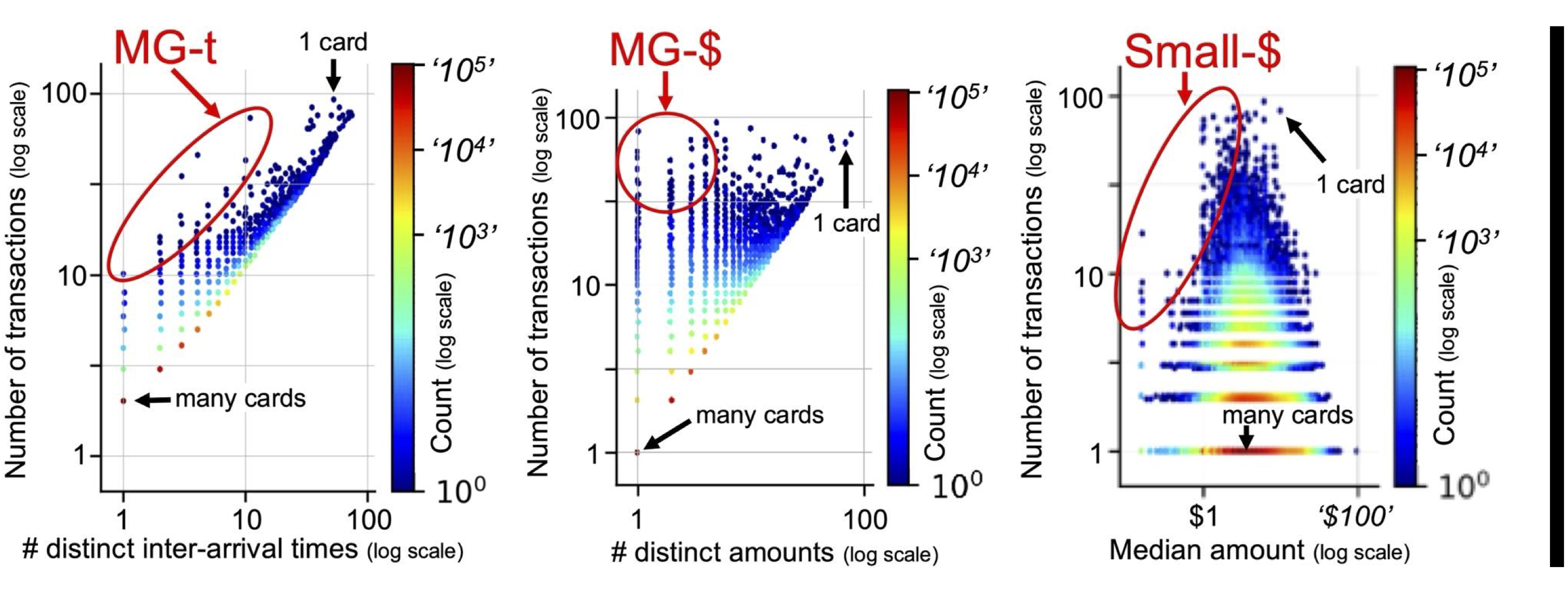}
        \caption{Heatmaps for detection.
        \label{fig:g1_plot}}
    \end{subfigure}%
    ~ 
    \begin{subfigure}[t]{0.3\textwidth}
        \centering
        \includegraphics[height=1.7in]{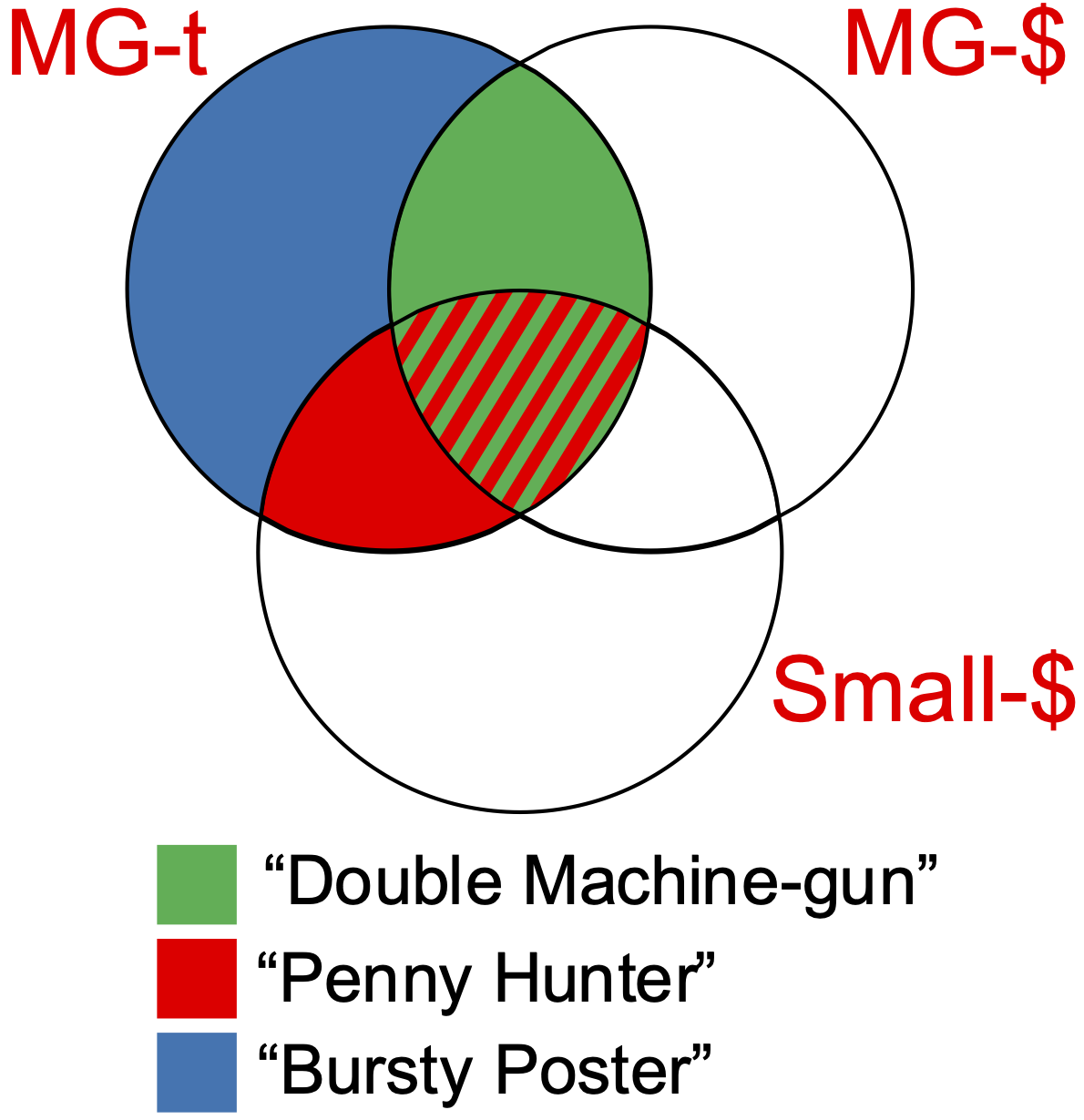}
        \caption{Venn diagram of strange behaviors.}
        \label{fig:g1_veen}
    \end{subfigure}
    \caption{ \label{fig:g1}
    Goal G1 -- \detection: 
    'MG-t': machine-gun behavior over time (e.g., every few seconds)
    'MG-\$': ditto, over amounts;
    'small-\$': unusually small, and repetitive, amounts.}
\end{figure*}

We use the count of distinct values to quantify repetition of inter-arrival times and amounts because it is much simpler than other options, like the entropy, and still very effective, as we show later in the paper.
We also employ the median amount instead of other types of average because it is robust to outliers.

\paragraph{More details:}
After extracting features for the cards, we use visualization with heatmaps to detect suspicious cases.
The reason we use heatmaps is over-plotting: several cards end up having the exact same features.
Notice that the color-scale of the heatmaps is {\em logarithmic}: `blue' indicates a single card,
while `red' indicates hundreds (or more) of cards.
Both axes of all heatmaps are also {\em logarithmic}, since we expected
(and found) multiple power laws and skewed distributions.

Also notice that we are currently doing {\em manual} detection of outliers and suspicious
points in our heatmaps.
In the near future, we plan to use some of the outlier detection methods in the background
Section~\ref{sec:background}, like Isolation Forest, \textsc{Gen$^2$Out}, etc.

Next, we describe the steps we used to spot the three new behaviors.
We first plot $\#$ of txns versus $\#$ of distinct inter-arrival times in log-log scale using colors to distinguish variations in density/overplotting of points.
Then, we ask the user to mark points in the top-left part of the plot (e.g., by drawing an ellipse over them, if there is any) because these points refer to cards with many txns and few distinct inter-arrival times.
We call this characteristic \textit{``Machine-gun in Time''} (or simply MG-t, for short) since it indicates cards with repetitive actions like machine guns, such as a card posting $100$ txns at every $5$ seconds ($\#$~of txns:~$100$; $\#$~distinct inter-arrival times:~$1$).
Figure~\ref{fig:g1_plot}~(left) depicts the plot for dataset \afidataset.
Note the several outlying, blue points marked with an ellipse over them in the plot's top-left. Each one is a card exhibiting signs of MG-t, which is unusual since the densest (redder) regions showing what is ``normal'' for the cards are in the bottom-left of the plot.

We do the same manual inspection to discover two additional signs of repetition/automation:
\begin{compactitem}
    \item \textit{``Machine-gun in Amount''} (MG-$\$$): we detect cards acting as machine guns with repetitive amounts by plotting $\#$~of txns versus $\#$ of distinct amounts in a heatmap. Note the several examples detected in \afidataset, and marked with an ellipse over them in Figure~\ref{fig:g1_plot}~(middle).
    \item \textit{``Small-amount Only''} (Small-$\$$): we uncover cards posting mostly txns of small amounts by plotting $\#$~of txns versus median amount in a heatmap. Note the tens of examples detected in \afidataset, and highlighted with an ellipse over them in Figure~\ref{fig:g1_plot}~(right).
\end{compactitem}

We finish the detection by following the Venn diagram in Figure~\ref{fig:g1_veen}.
It leads us to three unusual behaviors that were previously unknown to the \afi~human experts.
Specifically, we classify cards with signs of MG-t and MG-$\$$ as \emphdmg,
because they act as machine guns with regard to both the inter-arrival times and the amounts of the txns.
Class \emphph regards cards with signs of MG-t and Small-$\$$, as they mostly post small-amount txns in a synchronized manner.
Finally, cards exhibiting only MG-t are classified as \emphcp.
They show automated behavior by synchronously posting many txns, but may be seen as less suspicious than cards of the previous two classes because no other sign of automation is apparent.

\subsection{\methodtwo for G2 - \gtwo} 

\paragraph{Choice of information to present:}
We present a novel, interactive dashboard for domain experts to do deep dives on any suspicious card 
(referred to as the \targetCard card)
and discover more evidence through visualization.
Our dashboard has five main functionalities, all carefully designed to spot lockstep and suspicious behavior:
\begin{compactitem}
    \item \textit{Egonet}: to spot collusions.
    \item \textit{Inter-arrival times ($\Delta_t$-before versus $\Delta_t$-after)}: to spot machine-gun over time.
    \item \textit{Spreadsheet}: to give full information.
    \item \textit{Temporal evolution}: to spot unusual time-of-day activity.
    \item \textit{Interactivity}: to ``navigate'' between cards interactively.
\end{compactitem}
We provide the details of each functionality in the following.

\paragraph{Egonet:} Figure~\ref{fig:dmg_example} (top left).
Here we catch collusion, by spotting cards with coordinated activity, merchants acting together to achieve illicit gains, and even card (owners) colluded with merchants.
To this end, we analyze the two-step-away egonet of the \targetCard card, and display its main core (non-empty $k$-core with largest $k$) along with the corresponding $k$.
The {\em core number} or {degeneracy} of a node is defined in the footnote
\footnote{
The core number of
a node is the largest value $k$ of a $k$-core containing that node;
a $k$ core is a subgraph where all nodes  have at least $k$ neighbors in the subgraph.
Notice that computing the core-number of all nodes is a fast, linear operation.};
the intuition is that a high core number means that the node is well connected,
which usually means that the node is part of coordinated, lockstep activity.
Nodes are shaped as circles and squares to represent cards and merchants, respectively.
Each node is shown along with the corresponding ID, $\#$ of txns, $\#$ of unique counterparties, and $\#$ of confirmed fraudulent txns (if there are any and labels are provided); the last three values are given in parentheses, in that order.
The node of the \targetCard card is highlighted in purple, and all other nodes are shown in gray.
Edge widths are defined by the count of txns between each card and merchant, specifically using the log of the count for better visualization of cases with small and large counts.
Confirmed fraud is shown in separate red edges.\looseness=-1

Figure~\ref{fig:dmg_example}~(top left) shows the egonet of an example \dmg~card \dmgexample.
Thanks to \methodtwo, we note that the majority of the $66$ txns of this card were made with a single, small merchant, which may indicate collusion between them.\looseness=-1

\paragraph{Inter-arrival times ($\Delta_t$-before versus $\Delta_t$-after):} Figure~\ref{fig:dmg_example} (top middle).
Here we detect machine-gun over time by creating a plot to depict the inter-arrival times of txns.
We plot $\Delta_t$-before versus $\Delta_t$-after in log-log scale for every triplet of consecutive txns $<\txni,\txnj,\txnk>$ of the target card. $\Delta_t$-before is the inter-arrival time between $\txni$ and~$\txnj$. $\Delta_t$-after is the inter-arrival time between $\txnj$ and $\txnk$.
Semi-transparent red circles are used for the markers, thus leveraging transparency and overplotting to distinguish variations in the density of points.
Importantly, points close to the plot's main diagonal show repetition of inter-arrival times.
Hence, machine-gun behavior of the \targetCard card is evidenced through many red circles overplotting with each other along the main diagonal.
We also allow comparison with the ``normal'' behavior by adding an underneath, gray-scale heatmap built from the txns of every other card.

Figure~\ref{fig:dmg_example}~(top middle) depicts the inter-arrival times for our 
\targetCard card \dmgexample. 
Machine-gun over time is clearly evidenced through many red circles along the plot's main diagonal.
Particularly in this case, the txns occurred mostly at every $\sim3$ minutes.

\paragraph{Spreadsheet:} Figure~\ref{fig:dmg_example} (top right).
Here we give the full information for a few txns of the \targetCard card.
The txns are shown in chronological order with merchant ID, amount, timestamp, and inter-arrival time,
as well as
counts of suspicious activities (indicated by \afi) - 
for example, count of failed authentications during last week, last day, etc.
The last column is the fraud labels, if they exist.
We use colors to highlight repeated merchants, amounts, and inter-arrival times,
as well as non-zero counts of suspicious activities and cases of confirmed fraud.

Figure~\ref{fig:dmg_example}~(top right) displays the spreadsheet for our 
\targetCard card \dmgexample.
It shows that we have:
(a) machine-gun in the amount, with mostly $\$0.99$ txns;
(b) a preference to a single merchant ({\em `8295'}), and;
(c) no cases of fraud previously known to \afi.

\paragraph{Temporal evolution:} Figure~\ref{fig:dmg_example} (bottom left).
Here we catch unusual time-of-day activity.
We group the txns of the target card by their timestamps considering bins of length $l$ minutes.
Then, we create a two-y-axis plot to show both the total $\#$ of txns (leftmost y-axis, with blue bars) and the total amount (rightmost y-axis, with orange points) of each group over time (x-axis).
We also allow comparison with the ``normal'' behavior by adding two curves to the plot; they show the combined activity of all other cards over time, with total $\#$ of txns and total amount shown by a blue and an orange curve, respectively.
For better visualization, we normalize the combined activity according to that of the \targetCard card.
The whole process is repeated twice, with bins of a fine ($l=10$) and a coarse ($l=60$) granularity, thus creating two plots shown on top of each other.

Figure~\ref{fig:dmg_example}~(bottom-left) depicts the temporal evolution for our example card \dmgexample.
Most of the activity goes as expected, that is, txns in business hours with a lunch break.
Thanks to \methodtwo, we also catch an unusual activity with several txns around $8$~PM.

\paragraph{Interactivity:} Figure~\ref{fig:dmg_example} (top left).
Our dashboard is interactive. This means that the user can click on any card shown in the egonet of the \targetCard card, and then see the updated dashboard for that other card.
Hence, the user can ``navigate'' between cards of interest, interactively.



%
%
\vspace{5mm}
\subsection{Complexity Analysis}
\begin{lemma}
\methodone requires time linear on the input size.
\end{lemma}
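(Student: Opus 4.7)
The plan is to decompose \methodone into its computational stages and bound each by $O(N)$, where $N=|\dataset|$ is the total number of transactions in the input. The non-computational stages (user marks outliers, user labels groups) do not contribute to asymptotic runtime, so the work reduces to the feature extraction in Line~1 and the heatmap construction in Lines~2--5.

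First I would group the $N$ txns by card~ID using a hash map in expected $O(N)$ time, producing $C$ groups of sizes $n_1,\dots,n_C$ with $\sum_c n_c = N$ and $C\le N$. For each card~$c$, I would argue that all four features can be computed in $O(n_c)$ time: the $\#$~of txns is the group size; the $\#$~of distinct amounts is obtained by inserting amounts into a hash set in $O(n_c)$ expected time; the $\#$~of distinct inter-arrival times is computed in a single pass over the txns, which are assumed to arrive in chronological order as is standard for streaming financial data (otherwise a radix/bucket sort on integer timestamps still yields $O(n_c)$); and the median amount is obtained in $O(n_c)$ worst case via the linear-time selection algorithm (or in expected $O(n_c)$ via quickselect). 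Summing over cards, feature extraction costs $O\!\bigl(\sum_c n_c\bigr)=O(N)$.

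Next I would bound the heatmap stage. With $\numfts=4$ features there are $\binom{\numfts}{2}=6$ pairs, a constant. Each heatmap plots one point per card into a two-dimensional grid of bins, which can be populated with a single pass over the $C$ cards in $O(C)=O(N)$ time, and the log-log color scaling is a constant-time transformation per bin. Summing across the constant number of pairs still gives $O(N)$. Combining the two stages yields the claimed linear complexity.

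The main obstacle I expect is the subtlety around the distinct inter-arrival-time feature, since computing differences between consecutive events in principle requires a sorted sequence of timestamps per card, which would cost $O(n_c\log n_c)$ in the worst case and break strict linearity. I would handle this by stating explicitly the assumption that the txn stream is pre-sorted by timestamp (realistic for append-only financial logs), or alternatively by invoking integer-timestamp bucket sort; either assumption suffices to preserve the $O(N)$ bound while keeping the argument short.
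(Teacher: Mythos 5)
Your proposal is correct and follows essentially the same route as the paper, whose own proof is only a one-sentence sketch asserting that each carefully chosen feature needs a single linear pass over the transactions. You fill in the details the paper omits (hashing by card ID, linear-time median selection, the constant number of feature pairs for the heatmaps) and, usefully, you make explicit the sorted-timestamp assumption that the inter-arrival-time feature silently relies on.
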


\begin{proof}
    (Sketch) All the features we extract are carefully
    selected, so that they require linear time
    on the number of txns - 
    most of them requiring just a single pass 
    over the dataset.
\end{proof}

Figure ~\ref{fig:scale} illustrates
the linear behavior.
It plots the number of txns versus the run time in seconds.
Notice that our \method scales linearly on the input size, 
and takes less than 10 minutes to run for 
3.2M transactions on a stock CPU server.

\begin{figure}[hbtp]
\centering
\includegraphics[width=0.4\textwidth]{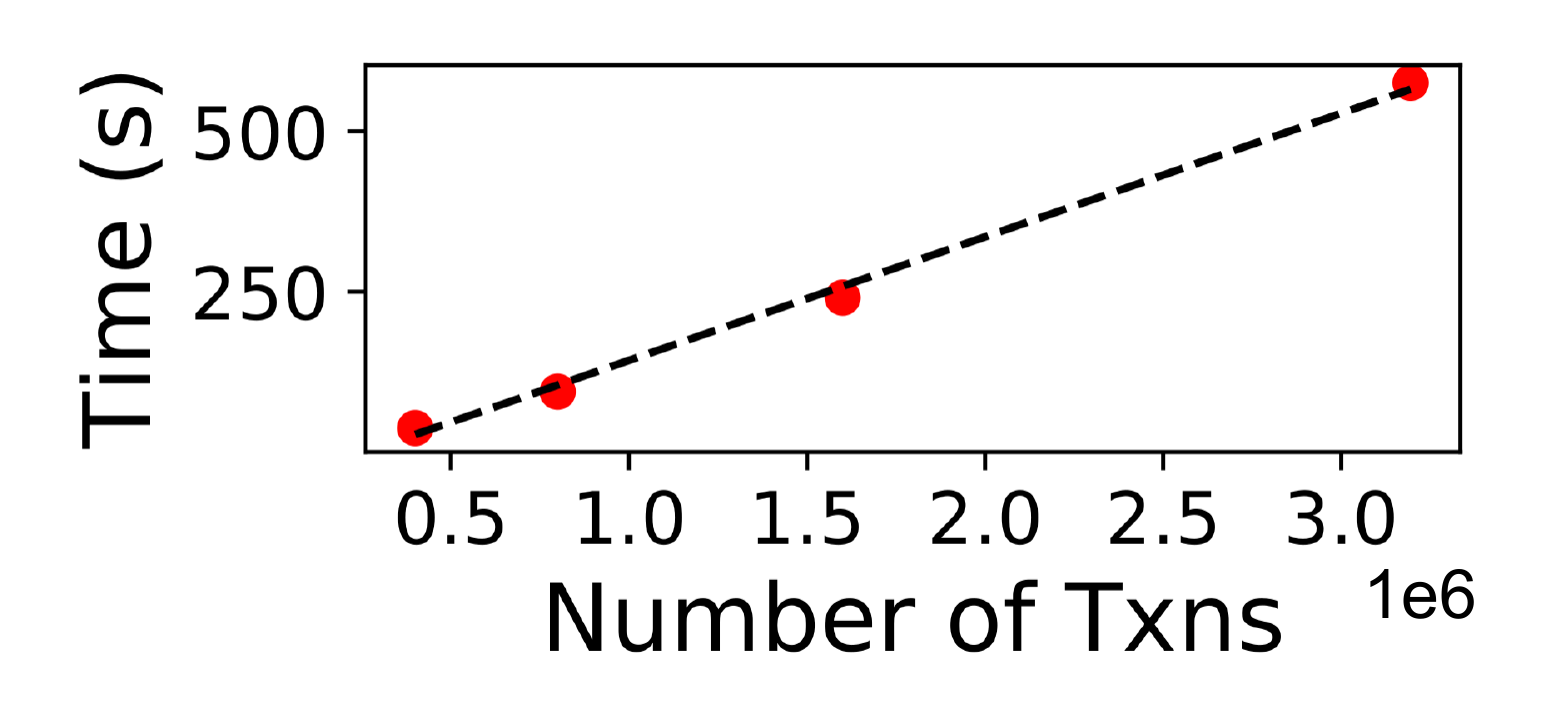}
\caption{\emphasize{\method is scalable}: 
which scales linearly on input size, 
and takes only 10 minutes for 3.2M transactions. \label{fig:scale}}
\end{figure}

\begin{figure*}[t!]
    \centering
    \begin{subfigure}[t]{0.3\textwidth}
        \centering
        \includegraphics[height=2.35in]{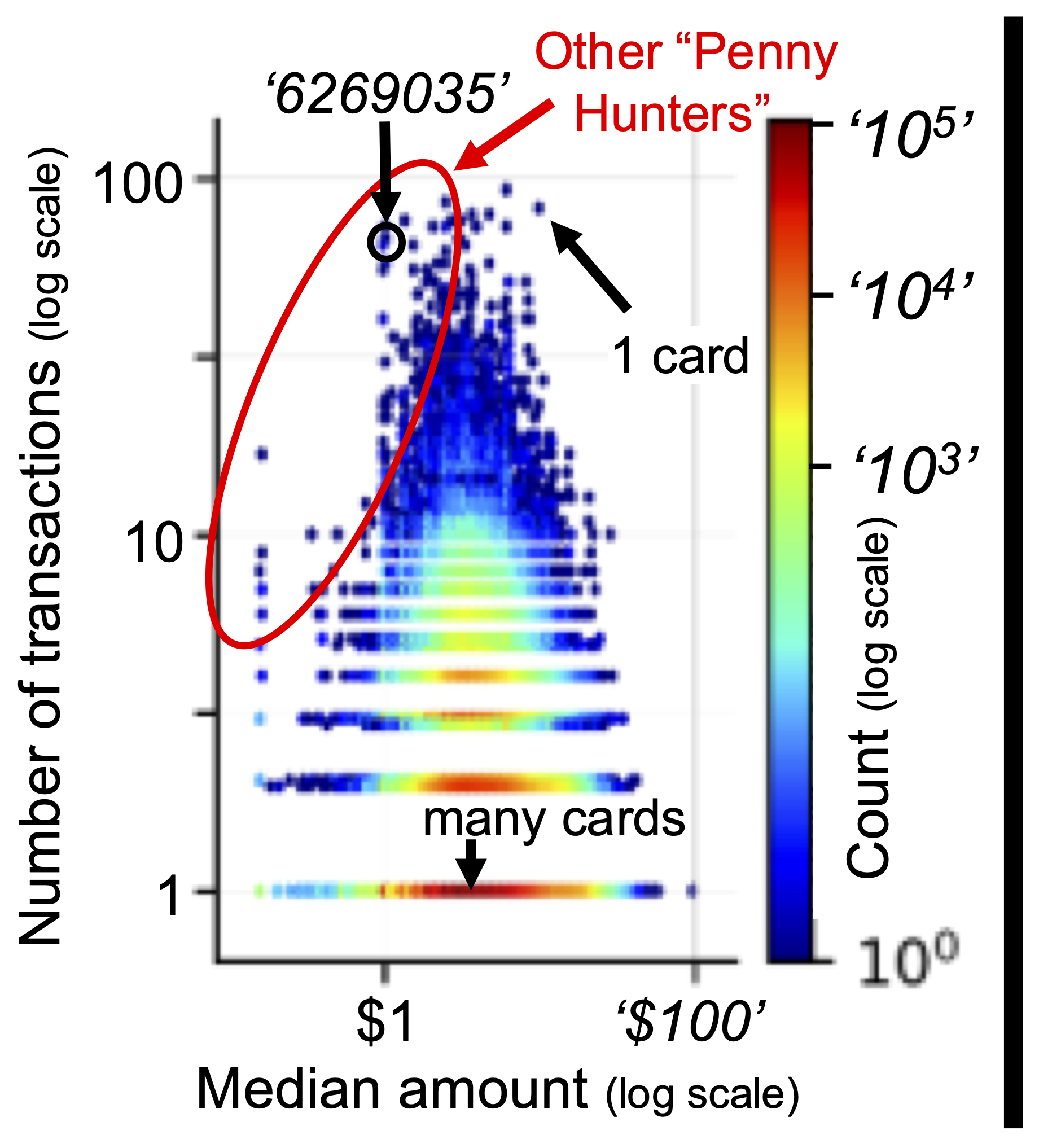}
        \caption{Goal G1 -- \gone}
        \label{fig:ph_others}
    \end{subfigure}%
    ~ 
    \begin{subfigure}[t]{0.7\textwidth}
        \centering
        \includegraphics[height=2.35in]{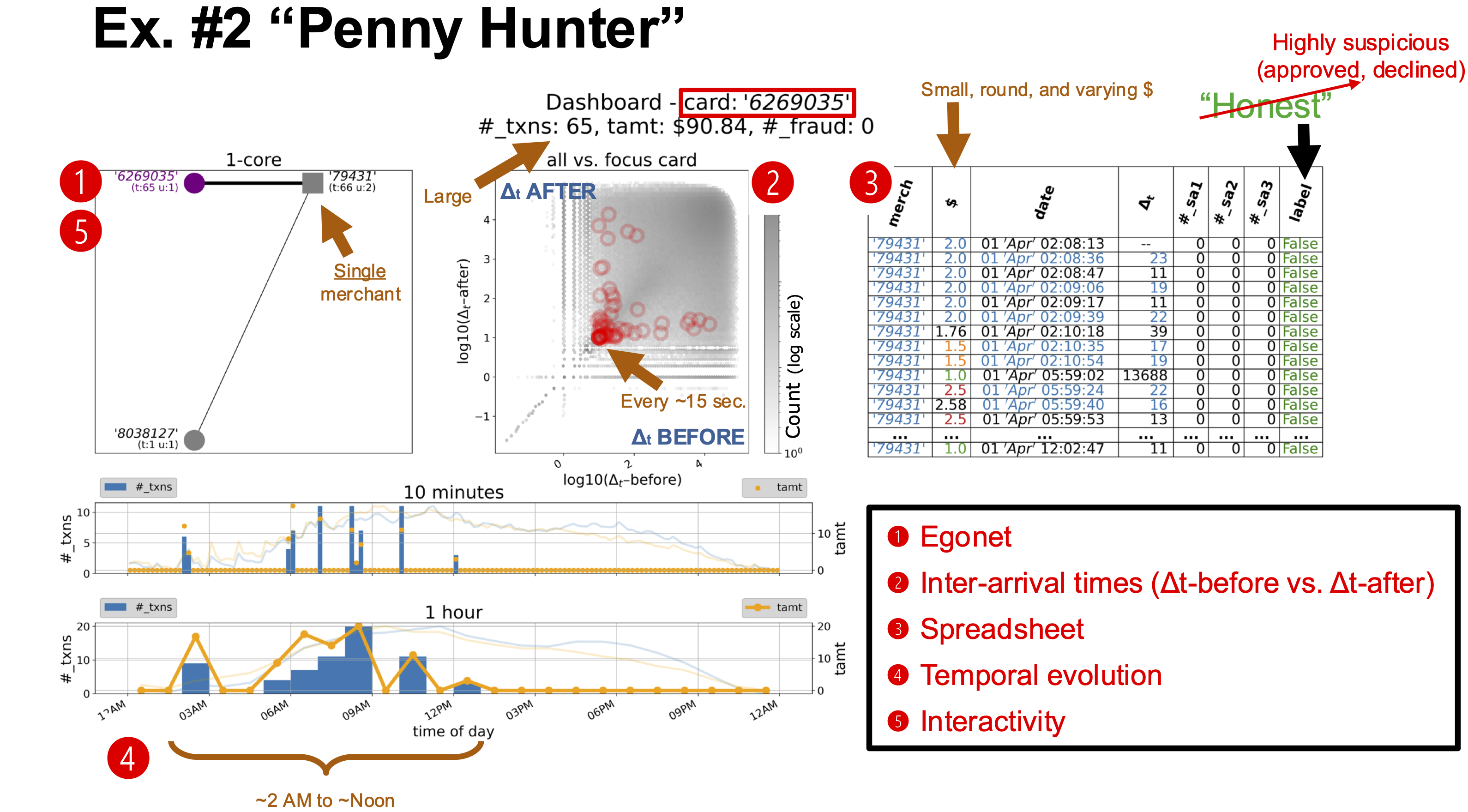}
        \caption{Goal G2 -- \gtwo}
        \label{fig:ph_example}
    \end{subfigure}
    \caption{\emphasize{Example of \ph} \label{fig:ph}: 
    Notice the high count of small-value transactions, every $\approx$15 seconds.}
\end{figure*}



\begin{figure*}[t!]
    \centering
    \begin{subfigure}[t]{0.3\textwidth}
        \centering
        \includegraphics[height=2.325in]{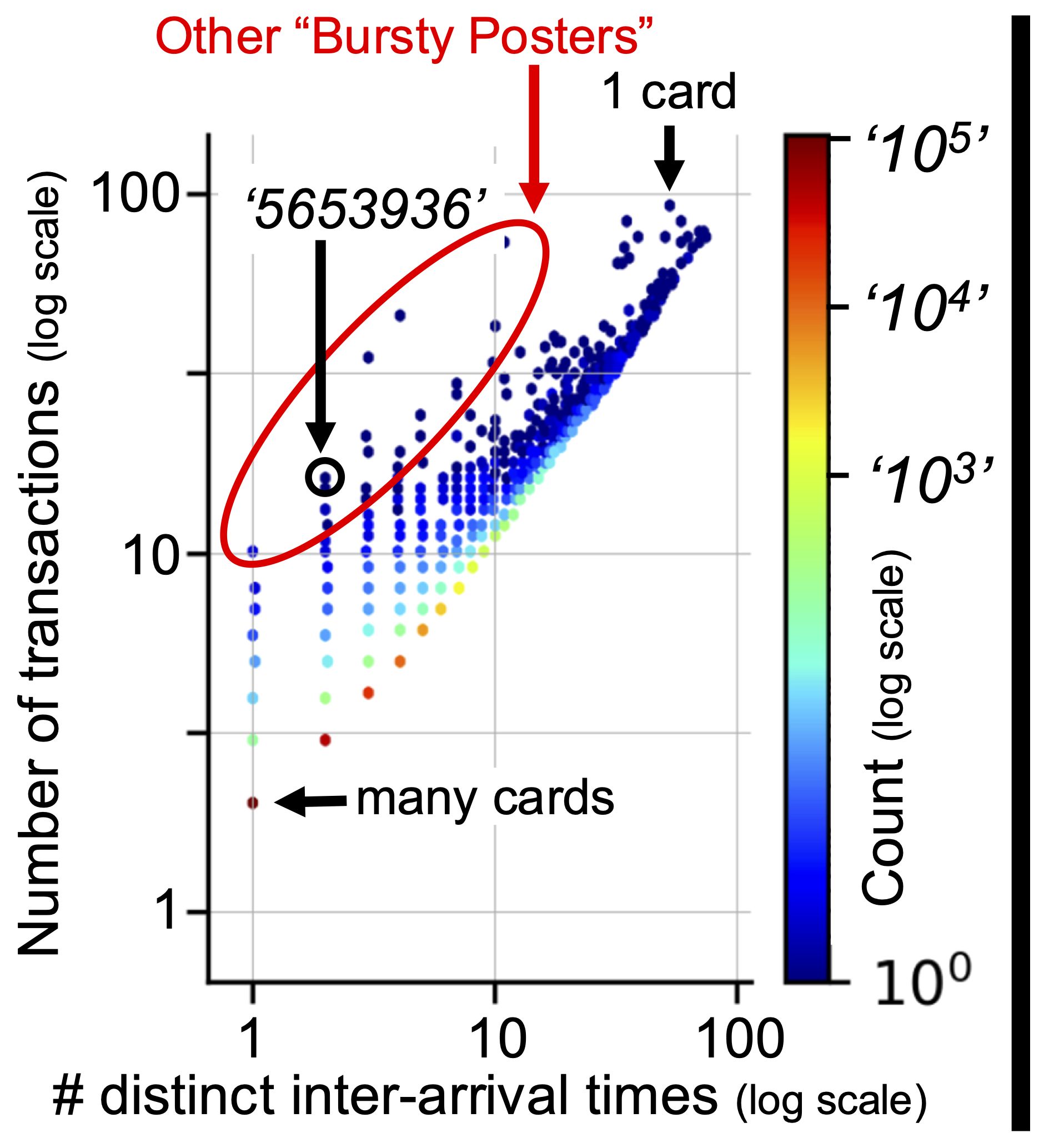}
        \caption{Goal G1 -- \gone}
        \label{fig:cp_others}
    \end{subfigure}%
    ~ 
    \begin{subfigure}[t]{0.7\textwidth}
        \centering
        \includegraphics[height=2.325in]{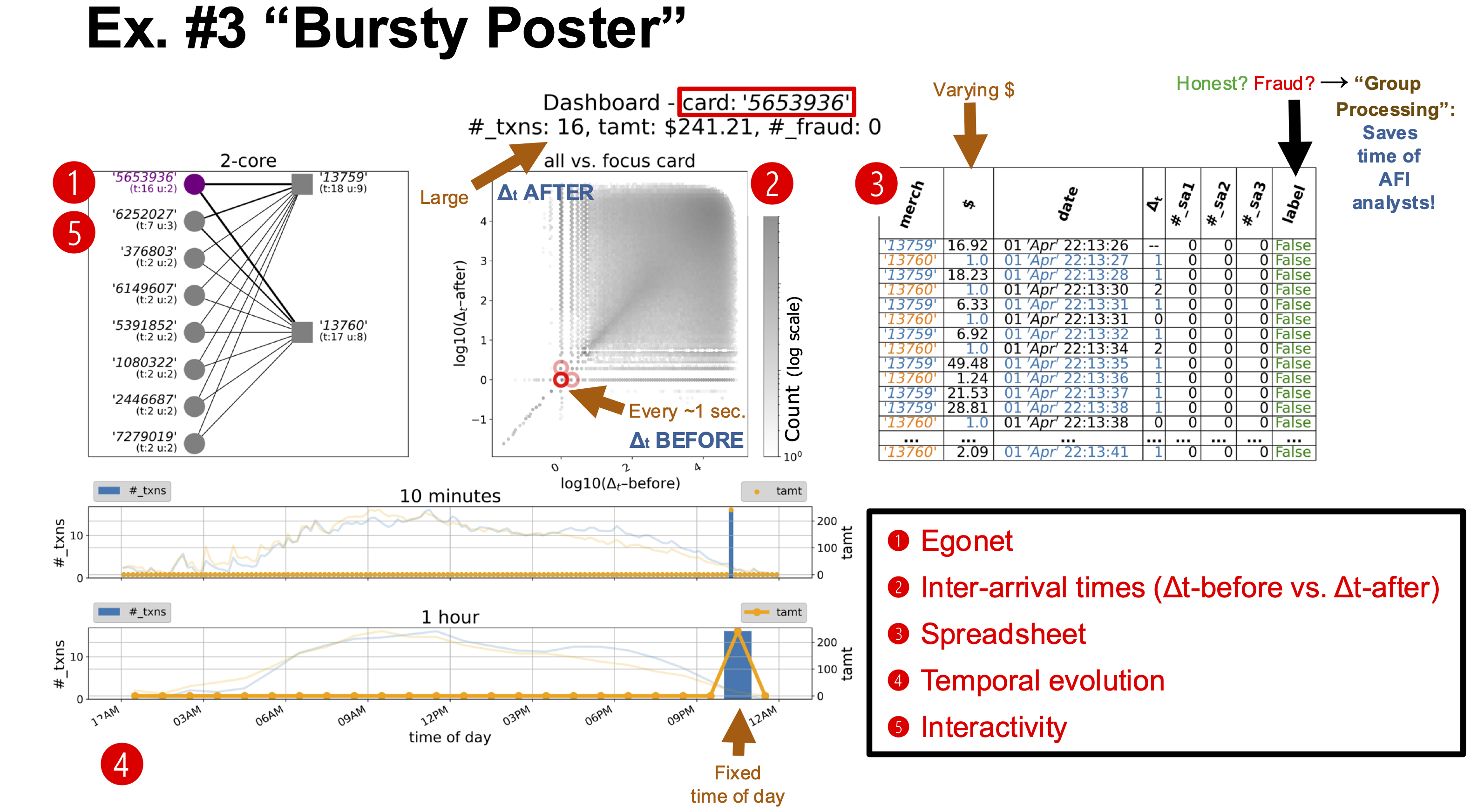}
        \caption{Goal G2 -- \gtwo}
        \label{fig:cp_example}
    \end{subfigure}
    \caption{\emphasize{Example of \cp:} \label{fig:cp}
    Notice the bursty activity, at a strange time of day (10~PM).}
\end{figure*}

\begin{lemma}
After a linear preprocessing step, 
    \methodtwo also requires  time linear on the number of txns
    and neighbors for the card under investigation.
\end{lemma}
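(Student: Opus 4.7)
(Plan) The plan is to argue per functionality of \methodtwo, after isolating the one-time preprocessing step shared across all cards the user may investigate.

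First, I would set up the preprocessing: sort all txns chronologically (or per-card), bucket them by card ID and by merchant ID, and build the bipartite card--merchant adjacency list with aggregated edge weights. Each of these operations is linear (or linearithmic, which we can replace by bucket/radix sort on timestamps since they have fixed resolution) in the size of $\dataset$, so they fit in the allowed ``linear preprocessing'' budget. Let $n_c$ denote the number of txns of the \targetCard card and $N_c$ the number of nodes and edges in its two-step egonet (its ``neighbors'' in the statement).

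Next, I would handle each of the five dashboard functionalities in turn. For the \emph{spreadsheet} and the \emph{temporal evolution}, a single pass over the $n_c$ txns of the target card suffices: fixed-width time bins can be assigned in $O(1)$ per txn, and the normalizing ``other cards'' aggregate can be precomputed once during preprocessing. For the \emph{inter-arrival times} plot, the txns of the target card are already sorted, so scanning consecutive triplets $\langle \txni,\txnj,\txnk\rangle$ is $O(n_c)$; the gray-scale background heatmap over all other cards is again computed once in preprocessing. \emph{Interactivity} only changes which card is treated as \targetCard, so its cost per click reduces to redrawing the other four panels. Thus the only nontrivial piece is the \emph{egonet}.

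For the egonet, I would argue that the two-step neighborhood around the target card can be materialized by two BFS expansions in the precomputed adjacency lists, which costs $O(N_c)$. The main core is obtained via the Batagelj--Zaver\v{s}nik peeling algorithm, which computes the core numbers of all $N_c$ nodes in $O(N_c)$ time using bucket queues, after which the largest non-empty $k$-core and its induced subgraph are extracted in another linear sweep. Combining the bounds, the total per-query cost of \methodtwo is $O(n_c + N_c)$, as claimed.

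The main obstacle is the egonet step: one has to be careful that ``linear in neighbors'' really covers both nodes and edges of the two-step egonet, and that the $k$-core routine indeed runs in linear time rather than $O(N_c \log N_c)$. I would handle this by explicitly citing the linear-time degeneracy algorithm and noting that edge weights (log of txn counts) do not affect the peeling order, so the linear bound is preserved. Everything else is a routine single-pass argument.
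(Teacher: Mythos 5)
Your proposal is correct and follows the same approach as the paper, whose proof is only a one-line sketch asserting that every dashboard element needs just a linear pass over the related information; you simply fill in the details the sketch omits (the preprocessing, the per-panel passes, and the linear-time $k$-core peeling for the egonet, which the paper itself relegates to a footnote). No gap; your version is a faithful, more rigorous elaboration of the paper's argument.
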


\begin{proof}
    (sketch): Again, all the elements we need to plot for the dashboard
    require a linear pass on the related information.
\end{proof}

In fact, the dashboard usually takes a few seconds on a stock machine,
for the million-scale
dataset we are working with.

\section{Experiments - Discoveries}
\label{sec:exp}

\hide{
\begin{table*}[ht]
\begin{center}
\begin{tabular}{r|r|l}
   Nodes & Edges & Description  \\ \hline \hline
  \multicolumn{3}{l}{\textbf{Real-world Networks}} \\ \hline \hline 
  13,579 & 37,448 & AS Oregon \\  
  23,389 & 47,448 & CAIDA AS 2004 to 2008  \\
  \hline \hline
\end{tabular}
\end{center}
\caption{Summary of real-world networks used.}
\label{tab:datasets}
\end{table*}
}

%

%

Thanks to our method \method, we processed real data \afidataset\xspace
and discovered new types of suspicious behavior.
The \afi analysts are currently investigating them,
and one of them (\dmg) appears to be a good predictor of fraud.
For ease of reference, we gave these three new behaviors
the following names: \dmg, \ph, \cp, for reasons
that will be clear next.



\subsection{\dmg}
One of the most interesting cases we identified are cards with many, synchronized txns of the same amount.
We call them \dmg~because they exhibit two distinct types of machine-gun behavior, by repeating the amounts and also repeating the inter-arrival times of the txns, as shown in Figure~\ref{fig:dmg_others}.
Figure~\ref{fig:dmg_example} shows the dashboard for one example \dmg~card \dmgexample.
This particular card had $66$ txns, mostly of $\$0.99$, occurring at every $\sim3$ minutes within a day.
Thanks to \methodtwo, we also note that the majority of the txns were made with a single, small merchant, which makes the case more suspicious.
Despite being initially considered to be honest, these txns were later confirmed as fraud by the \afi~ collaborators after receiving an alarm from \methodone and reviewing the case with \methodtwo.

\subsection{\ph}
An additional relevant case that we found is cards with many synchronized txns of varying, round, and small amounts.
These cards also repeat the inter-arrival times of the txns like those of the type \dmg, but, differently, they vary the amounts with a tendency to keep them small and round, rarely surpassing $\$5$ probably to avoid being caught, as shown in Figure~\ref{fig:ph_others}.
We call them \ph~to emphasize the small-amount characteristic.
Figure~\ref{fig:ph_example} shows the dashboard for an example \ph~card \phexample.
This card in particular had $65$ txns of small, round amounts like $\$1$, $\$2$, $\$1.5$, etc., every $\sim15$ seconds within a day.
Thanks to our \methodtwo, we also verify that all the txns were made with a single merchant that had only one other txn in the day. 
It raises a suspicion that the owner of this card and the merchant may be colluding with one another.
Once again, despite being considered honest at first glance, these txns were later {\bf confirmed} to be 
highly suspicious 
by our \afi~collaborators thanks to the alarm 
raised by our \methodone and the visual support of \methodtwo.

\subsection{\cp}
Another relevant case we identified are cards that made many, synchronized txns in a fixed time of the day with rarely round and potentially large amounts.
We call them \cp~because they clearly depend on an automated system, provided that a human is not likely to synchronously post many txns within a short period, as shown in Figure~\ref{fig:cp_others}.
Figure~\ref{fig:cp_example} shows the dashboard for an example \cp~card \cpexample.
Thanks to \methodtwo, we can easily note that this card made $16$ txns of varying amounts like $\$49.48$, $\$2.09$, etc., at every $\sim1$ second within a minute, around $10$~PM.
Our dashboard also highlights that the txns were made with only two, small merchants, which may indicate collusion.
Initially considered honest, these txns were reviewed by \afi~human experts following the alert of \methodone.
The experts confirmed that the case is highly unusual, but could not tell if it is fraud or honest.
In either way, the discovery of the \cp~behavior is valuable to \afi~because it supports the development of group-decision systems for edge cases such as this one.

\section{Conclusions}
\label{sec:concl}


We presented \method, a system
that addresses our two design goals (\gone and \gtwo):
\method
helps financial analysts spot {\em new types} of fraud,
and justify their decisions.
Our method \method meets our design goals:
\begin{compactitem}
    \item {\emphdetection}: \methodone uses carefully designed features,
    and provides heatmaps where analysts can spot 
    suspicious micro-clusters.
    \item {\emphjustification}: The heatmaps and 
    our dashboard helps analysts easily discover more evidence,
    before they take remedial action.
    \item {\em \effective}: \method has already discovered
    three new types of behavior, one of which 
    (`double-machine-gun') is very suspicious.

\end{compactitem}

Moreover, \method scales linearly with the input size,
     as shown in Figure~\ref{fig:scale}.
     Thanks to its desirable properties above,
     \method is considered for \textbf{productization} within the \afi. 
The idea is to capitalize on \method to detect \emph{new types} of fraudulent behavior (unknown unknowns), before the domain experts are even aware of it, while still relying on classification to find new instances of fraud types that are already known.


{\bf Reproducibility:} 
Pending approval by the \afi, we plan to open-source our code
and provide synthetic data as sample input.


\bibliographystyle{abbrv}
\bibliography{BIB/ref}

\begin{thebibliography}{10}

\bibitem{akoglu2015graph}
L.~Akoglu, H.~Tong, and D.~Koutra.
\newblock Graph based anomaly detection and description: a survey.
\newblock {\em Data mining and knowledge discovery}, 29:626--688, 2015.

\bibitem{ankerst1999optics}
M.~Ankerst, M.~M. Breunig, H.-P. Kriegel, and J.~Sander.
\newblock Optics: Ordering points to identify the clustering structure.
\newblock {\em ACM Sigmod record}, 28(2):49--60, 1999.

\bibitem{cazzolato2023callmine}
M.~Cazzolato, S.~Vijayakumar, M.-C. Lee, C.~Vajiac, N.~Park, P.~Fidalgo, A.~J. Traina, and C.~Faloutsos.
\newblock Callmine: Fraud detection and visualization of million-scale call graphs.
\newblock In {\em Proceedings of the 32nd ACM International Conference on Information and Knowledge Management}, pages 4509--4515, 2023.

\bibitem{cazzolato2023tgrapp}
M.~T. Cazzolato, S.~Vijayakumar, X.~Zheng, N.~Park, M.-C. Lee, D.~H. Chau, P.~Fidalgo, B.~Lages, A.~J. Traina, and C.~Faloutsos.
\newblock Tgrapp: anomaly detection and visualization of large-scale call graphs.
\newblock In {\em Proceedings of the AAAI Conference on Artificial Intelligence}, volume~37, pages 16410--16412, 2023.

\bibitem{cazzolato2022tgraphspot}
M.~T. Cazzolato, S.~Vijayakumar, X.~Zheng, N.~Park, M.-C. Lee, P.~Fidalgo, B.~Lages, A.~J. Traina, and C.~Faloutsos.
\newblock Tgraphspot: Fast and effective anomaly detection for time-evolving graphs.
\newblock In {\em 2022 IEEE International Conference on Big Data (Big Data)}, pages 2214--2217. IEEE, 2022.

\bibitem{chen2016xgboost}
T.~Chen and C.~Guestrin.
\newblock Xgboost: A scalable tree boosting system.
\newblock In {\em Proceedings of the 22nd acm sigkdd international conference on knowledge discovery and data mining}, pages 785--794, 2016.

\bibitem{agtabular}
N.~Erickson, J.~Mueller, A.~Shirkov, H.~Zhang, P.~Larroy, M.~Li, and A.~Smola.
\newblock Autogluon-tabular: Robust and accurate automl for structured data.
\newblock {\em arXiv preprint arXiv:2003.06505}, 2020.

\bibitem{ester1996density}
M.~Ester, H.-P. Kriegel, J.~Sander, X.~Xu, et~al.
\newblock A density-based algorithm for discovering clusters in large spatial databases with noise.
\newblock In {\em kdd}, volume~96, pages 226--231, 1996.

\bibitem{guha2016robust}
S.~Guha, N.~Mishra, G.~Roy, and O.~Schrijvers.
\newblock Robust random cut forest based anomaly detection on streams.
\newblock In {\em International conference on machine learning}, pages 2712--2721. PMLR, 2016.

\bibitem{DBLP:conf/kdd/HooiSBSSF16}
B.~Hooi, H.~A. Song, A.~Beutel, N.~Shah, K.~Shin, and C.~Faloutsos.
\newblock {FRAUDAR:} bounding graph fraud in the face of camouflage.
\newblock In {\em {KDD}}, pages 895--904. {ACM}, 2016.

\bibitem{jiang2022d}
S.~Jiang, R.~L. Cordeiro, and L.~Akoglu.
\newblock D. mca: Outlier detection with explicit micro-cluster assignments.
\newblock In {\em 2022 IEEE International Conference on Data Mining (ICDM)}, pages 987--992. IEEE, 2022.

\bibitem{lee2021gawd}
M.-C. Lee, H.~T. Nguyen, D.~Berberidis, V.~S. Tseng, and L.~Akoglu.
\newblock Gawd: graph anomaly detection in weighted directed graph databases.
\newblock In {\em Proceedings of the 2021 IEEE/ACM International Conference on Advances in Social Networks Analysis and Mining}, pages 143--150, 2021.

\bibitem{lee2021gen}
M.-C. Lee, S.~Shekhar, C.~Faloutsos, T.~N. Hutson, and L.~Iasemidis.
\newblock Gen 2 out: Detecting and ranking generalized anomalies.
\newblock In {\em 2021 IEEE International Conference on Big Data (Big Data)}, pages 801--811. IEEE, 2021.

\bibitem{lee2024descriptive}
M.-C. Lee, L.~Zhao, and L.~Akoglu.
\newblock Descriptive kernel convolution network with improved random walk kernel.
\newblock In {\em Proceedings of the ACM on Web Conference 2024}, pages 457--468, 2024.

\bibitem{lee2020autoaudit}
M.-C. Lee, Y.~Zhao, A.~Wang, P.~J. Liang, L.~Akoglu, V.~S. Tseng, and C.~Faloutsos.
\newblock Autoaudit: Mining accounting and time-evolving graphs.
\newblock In {\em 2020 IEEE International Conference on Big Data (Big Data)}, pages 950--956. IEEE, 2020.

\bibitem{liu2008isolation}
F.~T. Liu, K.~M. Ting, and Z.-H. Zhou.
\newblock Isolation forest.
\newblock In {\em 2008 eighth ieee international conference on data mining}, pages 413--422. IEEE, 2008.

\bibitem{lloyd1982least}
S.~Lloyd.
\newblock Least squares quantization in pcm.
\newblock {\em IEEE transactions on information theory}, 28(2):129--137, 1982.

\bibitem{DBLP:conf/icdm/ShinEF16}
K.~Shin, T.~Eliassi{-}Rad, and C.~Faloutsos.
\newblock Corescope: Graph mining using k-core analysis - patterns, anomalies and algorithms.
\newblock In {\em {ICDM}}, pages 469--478. {IEEE}, 2016.

\bibitem{vinces2024mccatch}
B.~V.~S. Vinces, R.~L.~F. Cordeiro, and C.~Faloutsos.
\newblock Mccatch: Scalable microcluster detection in dimensional and nondimensional datasets.
\newblock In {\em 40th {IEEE} International Conference on Data Engineering, {ICDE} 2024, Utrecht, The Netherlands, May 13-16, 2024}, pages 1407--1420. {IEEE}, 2024.

\end{thebibliography}

\clearpage
\appendix


\section{Additional Experiments}
\label{app:features}
As mentioned earlier, the \afi has been using $F \approx 400$
numerical features for each transaction;
and it also has some fraud/non-fraud labels for a few transactions.

Our goal is to find a small subset of these features,
that would (a) work well on the existing, known types of fraud,
and (b) hopefully, also help in the new, unknown types
of fraud that \method is aiming for.

Thus, we did feature selection, and compare some
state-of-the-art (SOTA) supervised methods  with all the $F$
features,
against an unsupervised method (Isolation Forest)
with a few features.
Notice that this is an unfair comparison, favoring the supervised methods.

Yet, as shown in Table~\ref{tab:add}, 
the unsupervised method (Isolation Forest with 5 features) 
performs well, coming in second place consistently, and often very close
to the winner.

\begin{table}[htbp]
\caption{\emphasize{A few features are enough:} 
Unsupervised method with 5 features competes well
against SOTA classifiers (\colorbox{green}{first} and 
\colorbox{green!25}{second} place).
\label{tab:add}}
\setlength\fboxsep{0pt}
\centering{\resizebox{1\columnwidth}{!}{
\begin{tabular}{c | c | ccc}
\toprule
\textbf{Method} & \textbf{\# of Feat.} & \textbf{AP} & \textbf{Prec@100} & \textbf{Prec@1000} \\
\midrule
Random & N/A & 00.2$\pm$0.0 & 00.1$\pm$0.4 & 00.2$\pm$0.1 \\
\midrule
XGBoost & > 400 & \gold{24.2$\pm$0.0} & 71.0$\pm$0.0 & \gold{71.2$\pm$0.0} \\
Logistic Reg. & > 400 & 06.3$\pm$0.0 & 32.0$\pm$0.0 & 16.9$\pm$0.0 \\
Random Forest & > 400 & 08.5$\pm$0.5 & \gold{86.6$\pm$1.6} & 41.5$\pm$2.1 \\
\midrule
Isolation Forest & 5 
                & \silver{21.4$\pm$0.0} 
                & \silver{85.6$\pm$0.5} 
                & \silver{47.9$\pm$0.1} \\
\bottomrule
\end{tabular}
}}
\end{table}

This is an indication that the 5 chosen features are inherently useful,
and they may also help in spotting new types of fraud.


\end{document}